\documentclass[twoside]{article}

\usepackage{PRIMEarxiv}

\usepackage[utf8]{inputenc} 
\usepackage[T1]{fontenc}    
\usepackage{hyperref}       
\usepackage{url}            
\usepackage{booktabs}       
\usepackage{amsfonts}       
\usepackage{nicefrac}       
\usepackage{microtype}      
\usepackage{lipsum}
\usepackage{amsthm}
\newtheorem{theorem}{Theorem}
\newtheorem{proposition}{Proposition}

\newtheorem{remark}{Remark}
\usepackage{fancyhdr}       
\usepackage{graphicx}       
\usepackage{amsmath} 
\graphicspath{{media/}}     
\fancypagestyle{plain}{%
  \fancyhf{}%
  \fancyfoot[RO,LE]{\thepage}%
}

\title{Prescriptive SVD-Inspired Attention\\ via Spectral Energy Retention
\thanks{
Accepted for publication in Transactions on Machine Learning Research (TMLR), 2026.
\url{https://openreview.net/forum?id=LZBWqyWNxS}
}
}

\author{\small
  \textbf{Vasileios Arampatzakis, Vasileios Sevetlidis, George Pavlidis} \\
  Athena Research Center, Greece \\
  \texttt{\{vasilis.arampatzakis, vasiseve, gpavlid\}@athenarc.gr} \\
}

\begin{document}
\maketitle
\thispagestyle{plain}

\begin{abstract}
Self-attention is central to modern Transformer architectures, but its dense dot-product formulation makes it difficult to identify which internal directions are structurally important and which can be modified without disrupting the model. SVD-Inspired Attention (SVDA) addresses part of this problem by introducing a learned diagonal spectrum into the query-key score interaction, making latent attention directions explicitly inspectable through indicators such as spectral entropy, effective rank, sparsity, alignment, selectivity, and perturbation response. This paper examines the transition from diagnostic interpretation to operational intervention. A diagnosis--intervention--verification framework is proposed, and one intervention is evaluated: spectral energy retention in the attention-score pathway. Across FashionMNIST, CIFAR-10, CIFAR-100, and Food-101, the $\rho=0.90$ prescription removes 24.5--53.7\% of score directions, reduces parameters by 2.6--4.3\%, and reduces estimated MACs by 2.8--5.4\%. The paired mean accuracy change of the dimension-reduced model ranges from $-0.03$ to $+0.05$ percentage points over three seeds. These results support SVDA as an intrinsically interpretable attention mechanism whose learned spectrum exposes an operational coordinate system for deterministic and verifiable modification of attention-score formation.
\end{abstract}

\keywords{SVD-inspired attention \and spectral energy retention \and Vision Transformers \and interpretable attention}

\section{Introduction}
\label{sec:introduction}

Transformer architectures have reshaped modern machine learning by replacing fixed local processing patterns with adaptive self-attention. In vision, language, dense prediction, and multimodal learning, attention has become a dominant mechanism for routing information across tokens, spatial regions, or semantic units. Yet the same flexibility that makes attention powerful also makes it difficult to interpret. Standard scaled dot-product attention forms dense interactions between projected queries and keys, but these interactions do not explicitly separate direction, magnitude, spectral concentration, or latent dimensional usage. As a result, attention maps may be visually inspectable but structurally opaque.

SVD-Inspired Attention (SVDA) was introduced as a geometrically structured alternative to standard dot-product attention~\cite{arampatzakis2025geometry}. Its core idea is to replace unconstrained query-key interaction with a formulation in which row-normalized query/key projections interact through a learned diagonal spectral modulation matrix. This structure draws inspiration from the singular-value decomposition, where directional information and spectral importance are represented separately. In subsequent work, SVDA was applied to monocular depth estimation~\cite{arampatzakis2026svda_mde} and image classification~\cite{arampatzakis2026svda_classification}, showing that the same spectral formulation can support intrinsic attention diagnostics across both dense prediction and classification settings. The central value of SVDA in these studies was \textit{diagnostic}. The learned spectrum made it possible to monitor spectral entropy, effective rank, spectral sparsity, angular alignment, attention selectivity, and perturbation robustness across layers, heads, and training epochs. These quantities provided a structured description of how attention organizes internally. They exposed whether a head used many or few latent directions, whether spectral energy was concentrated or diffuse, whether attention became more selective with depth, and whether attention distributions remained stable under small perturbations. However, diagnostic interpretability leaves an important question unanswered. Weak spectral directions, redundant heads, low-effective-rank layers, and perturbation-unstable attention maps should not remain merely descriptive diagnostics. They should provide evidence for targeted compression, pruning, regularization, or architectural adjustment. \textit{Interpretability becomes more useful when it supports intervention.} A diagnostic should not only describe model behavior, but ideally, it should also indicate where compression, pruning, regularization, or architectural adjustment may be justified.

In this work, \textit{prescription} refers to evidence-guided intervention: SVDA identifies structured directions in the attention-score operator and provides concrete candidates for verification.
A prescriptive extension of SVDA is developed, in which spectral diagnostics guide targeted interventions on the attention-score operator. The focus is on using the learned spectrum as an explicit control surface: low-energy directions, concentrated effective rank, perturbation-sensitive heads, and spectrally similar heads become concrete signals for pruning, compression, regularization, or redundancy analysis.
The proposed framework follows a \textit{diagnosis--intervention--verification} loop: train and diagnose an SVDA model, propose an intervention from the observed spectrum, and verify its predictive and structural effects. The formal analysis gives operator-level bounds for pruning and perturbation, and the empirical study evaluates spectral energy retention on FashionMNIST, CIFAR-10, CIFAR-100, and Food-101. The same framework also organizes effective-rank targeting, robustness regularization, and head-redundancy screening as spectrum-guided intervention modes. Matched random and largest-$\Sigma$ controls test the specificity of the learned spectral ordering.

The original SVDA formulation introduced geometry into attention. The present work examines whether that geometry can guide controlled modification of the attention-score pathway. The main contributions are listed below.
\begin{itemize}
\item A diagnosis--intervention--verification framework is introduced for SVDA, moving from passive inspection of attention spectra to actionable model intervention.
\item Several candidate interventions are defined as transformations of the SVDA attention operator, with spectral energy retention studied as the primary empirical instance.
\item Operator-level bounds relate spectral pruning to changes in pre-softmax scores and attention distributions, while effective rank and thresholded support are used as descriptive indicators of spectral concentration and candidate score-pathway dimensionality.
\item An energy-retention prescription is evaluated across four image-classification datasets, including a 224$\times$224 Food-101 setting, using masked and dimension-reduced score pathways plus repeated matched random masks, high-energy removal controls, a standard-attention magnitude baseline, and selected width-matched and ablation checks.
\item The study establishes a diagnosis--intervention--verification workflow in which attention interpretability directly informs a structurally realizable model edit.
\end{itemize}

\section{Related Work}
\label{sec:related_work}

\subsection{Attention Interpretability and Intrinsic Explanations}

The interpretability of attention mechanisms remains a central issue in Transformer research. Although attention maps are often visualized as explanations, several works have cautioned that raw attention weights should not automatically be treated as faithful explanations of model behavior~\cite{wiegreffe2019attention, bastings2020elephant}. In vision Transformers, Chefer et al.~\cite{chefer2021transformer} proposed relevance propagation beyond direct attention visualization, while Raghu et al.~\cite{raghu2021vit} analyzed how representations and attention evolve across depth. More recent interpretability-oriented approaches, such as prototype-based Vision Transformers~\cite{ma2024protovit} and interpretability-aware ViTs~\cite{qiang2023iavt}, attempt to make visual recognition more transparent through architectural or training-objective modifications. These approaches are important, but they either explain an already-trained attention mechanism or add auxiliary interpretability structures around it. SVDA follows a different route. The original SVDA formulation modified the attention operator itself by introducing row-normalized query/key projections and a learned diagonal spectral modulation matrix~\cite{arampatzakis2025geometry}. Subsequent SVDA studies demonstrated that this intrinsic structure exposes diagnostic indicators in both monocular depth estimation~\cite{arampatzakis2026svda_mde} and image classification~\cite{arampatzakis2026svda_classification}. The present work extends this line by asking whether those diagnostics can guide concrete interventions on the model.

\subsection{Structured, Low-Rank, and Spectral Attention}

Several Transformer variants introduce structure into attention for efficiency, stability, or improved representation. Linformer~\cite{wang2020linformer}, Performer~\cite{choromanski2020performer}, Nystr{\"o}mformer~\cite{xiong2021nystromformer}, and Linear Transformers~\cite{katharopoulos2020linear} reduce the cost of attention through low-rank, kernel, or linearized approximations. Sparse and routing-based methods, including Sparse Transformers~\cite{child2019sparse}, Reformer~\cite{kitaev2020reformer}, BigBird~\cite{zaheer2020bigbird}, Routing Transformers~\cite{roy2021routing}, and adaptively sparse Transformers~\cite{correia2019adaptively}, restrict token interactions to improve scalability. Closer to SVDA are approaches that introduce spectral or geometric structure into attention. Singularformer~\cite{wu2023singularformer} decomposes attention to reduce complexity, Primal-Attention~\cite{chen2023primal} uses asymmetric kernel SVD in primal representation, CosFormer~\cite{qin2022cosformer} replaces softmax attention with cosine-based structure, and SpecFormer~\cite{bo2023specformer} incorporates spectral decomposition ideas in Transformer-like architectures. Other analyses directly examine SVD structure in query--key interactions, for example by studying the singular structure of $W_Q^\top W_K$ in vision Transformers~\cite{pan2024dissecting_qk}. NormFormer~\cite{shleifer2021normformer} adds normalization to improve Transformer training stability. These methods demonstrate the value of structured attention, but they primarily target efficiency, stability, representational improvement, or post-hoc analysis of existing attention interactions.

Several recent works have also explored more explicit forms of spectral control in Transformer architectures. Dynamic Spectral Weighting modulates attention heads using spectral characteristics of their outputs~\cite{huang2025dynamic_spectral_weighting}. Spectral Conditioning of Attention studies the Jacobian conditioning of attention blocks and modifies spectral properties of attention layers to improve stability and performance~\cite{saratchandran2026spectral_conditioning}. Frequency-domain spectral attention methods filter the attention score matrix through learnable spectral masks~\cite{huang2026spectral_attention}. These works indicate a broader interest in using spectral structure not only for analysis, but also for controlling attention behavior. However, they differ from SVDA in that SVDA introduces an explicit learned diagonal spectrum inside the query-key interaction itself, making latent directional importance directly inspectable and intervention-ready.

\subsection{SVDA as Diagnostic}

SVDA defines the attention operator as
\begin{equation}
\label{SVDA_attention}
A = \operatorname{softmax}\left(\frac{\left(Q\Sigma\right) K^{\top}}{\sqrt{d_k}}\right)
\end{equation}
where $Q,K\in\mathbb{R}^{n\times d_k}$ are row-normalized query and key projections, and $\Sigma = \operatorname{diag}(\sigma_1,\ldots,\sigma_{d_k})$ is a learnable diagonal modulation matrix. The term ``SVD-inspired'' refers to this separation between directional coordinates and diagonal spectral modulation; the model does not compute an SVD of the attention matrix, and the learned coefficients are not constrained to be singular values. In particular, $\sigma_r$ may be signed, while the energy-retention rule uses $\sigma_r^2$ only to define a nonnegative within-head ordering. This formulation separates directional alignment from spectral importance and enables indicators such as spectral entropy, effective rank, spectral sparsity, angular alignment, selectivity, and perturbation robustness. The role of $\Sigma$ is to reweight latent attention directions before the query-key interaction is converted into an attention distribution. The monocular depth estimation study integrated SVDA into DPT and showed that the same indicators expose depth-wise and training-time organization of attention in dense prediction~\cite{arampatzakis2026svda_mde}. The image-classification study adapted SVDA to ViTs and used the indicators to analyze attention structure across FashionMNIST, CIFAR-10, CIFAR-100, and ImageNet-100~\cite{arampatzakis2026svda_classification}. These studies established SVDA as an intrinsically diagnostic attention mechanism. They showed that the learned spectrum is not merely an additional parameterization, but a compact representation of how attention allocates latent directional capacity. The present work builds directly on this diagnostic foundation and makes the transition from diagnosis to intervention. Once weak directions, redundant heads, low-rank behavior, or unstable attention patterns are identified, the model can be modified in a structured way.

\subsection{Transformer Pruning, Compression, and Head Redundancy}

Model pruning and compression aim to reduce parameter count, computation, or memory cost while preserving predictive performance. In Transformers, attention heads, tokens, projection dimensions, and intermediate representations are natural targets for pruning. Early work showed that many attention heads can be removed with limited loss in performance~\cite{michel2019heads, voita2019analyzing}. In vision Transformers, token pruning and adaptive computation methods such as DynamicViT~\cite{rao2021dynamicvit}, EViT~\cite{liang2022evit}, and TokenLearner~\cite{ryoo2021tokenlearner} reduce redundant token processing. More recent pruning frameworks use stronger sensitivity, explanatory, latency, or curvature information. X-Pruner learns explainability-aware masks to prune vision Transformer units~\cite{yu2023xpruner}, while multidimensional pruning methods such as MDP optimize across several prunable axes, including query/key dimensions, heads, embeddings, and blocks under latency constraints~\cite{sun2025mdp}. HEART-ViT uses Hessian-guided dynamic attention and token pruning to jointly reason about token and head redundancy~\cite{uddin2025heartvit}. More broadly, adversarial pruning studies show that compression and robustness can be treated jointly, with pruning decisions evaluated not only by accuracy retention but also by their effect on robustness under adversarial perturbations~\cite{piras2025adversarial}. The present work is related to this pruning literature but differs in its source of evidence and its intervention target. Standard pruning methods often use magnitude, sensitivity, loss approximation, learned gates, explanatory masks, or latency-constrained architecture search. SVDA-guided pruning uses the learned attention spectrum itself. Since each diagonal entry of $\Sigma$ directly scales one latent score direction, spectral sparsity and effective rank provide operator-level evidence about which directions may be weak, redundant, or compressible. Thus, SVDA-guided pruning is a spectrum-grounded intervention rule in which coefficient energy is treated as an explicit operator-level signal and then tested against functional controls.

The compression target considered here is narrower than those of token-, head-, or multidimensional-pruning methods. DynamicViT, EViT, and TokenLearner reduce the number of tokens processed by later layers, head-pruning methods remove complete attention heads, and multidimensional methods may jointly change several architectural axes. SVDA energy retention instead reduces selected coordinates only in the query--spectrum--key score pathway and leaves the token sequence, value pathway, and output projection unchanged. Consequently, reductions in parameters or MACs are not directly comparable unless the methods are evaluated under a matched total-compute or parameter budget. These approaches are complementary: SVDA energy retention targets latent score directions inside an explicitly parameterized score operator, whereas token, head, and multidimensional pruning alter different structural axes of the Transformer.

\subsection{Robustness and Stability of Attention}

Robustness in attention-based models concerns the stability of predictions and internal representations under input perturbations, adversarial changes, or distribution shifts. Prior work has studied adversarially robust attention~\cite{kitada2021attention}, semantic perturbation robustness~\cite{munakata2022verifying}, and fine-grained sensitivity of dot-product self-attention~\cite{havens2024finegrained}. In the SVDA line, perturbation robustness was introduced as an attention-level diagnostic measuring the Frobenius change in attention maps under small input perturbations~\cite{arampatzakis2025geometry}. The same diagnostic was later used in depth estimation and classification to inspect stability across layers and epochs~\cite{arampatzakis2026svda_mde, arampatzakis2026svda_classification}. This paper treats robustness diagnostics as actionable signals. If a layer or head shows high perturbation response, SVDA makes it possible to inspect whether instability is associated with the learned spectrum, projection geometry, or attention distribution. The theoretical analysis below further shows that attention-score sensitivity is controlled in part by the spectral norm of $\Sigma$, giving a formal basis for robustness-guided intervention.

\section{From Diagnostic to Prescriptive SVDA}
\label{sec:diagnostic_to_prescriptive}

SVDA exposes structure inside the attention-score mechanism. The present work addresses the use of this structure as evidence for controlled intervention. We argue that the explicit spectral parameterization of SVDA makes attention not only diagnosable but also actionable. Since the diagonal entries of $\Sigma$ determine the contribution of individual latent directions, interventions on $\Sigma$ induce direct and analyzable changes in the attention operator. This observation motivates a prescriptive extension of SVDA, in which spectral diagnostics are used to guide pruning, compression, robustness tuning, and architectural adaptation.

The term \emph{prescriptive} emphasizes targeted, evidence-guided intervention. Low entropy, low effective rank, or high sparsity are meaningful in relation to a layer, task, and verification protocol. Prescriptive SVDA uses spectral diagnostics as concrete evidence for targeted intervention: a low-energy spectral direction may be pruned, head redundancy may be flagged for further analysis, an unstable layer may be regularized or fine-tuned, and a persistently low-rank region of the model may justify architectural compression. The proposed framework therefore follows a simple diagnosis--intervention--verification loop:
\begin{equation*}
\text{train}\;\longrightarrow\;\text{diagnose}\;\longrightarrow\;\text{intervene}\;\longrightarrow\;\text{verify}
\end{equation*}

This section formalizes how interventions on $\Sigma$ modify the SVDA score operator. The analysis focuses on operator-level consequences: score perturbations caused by spectral pruning and score sensitivity under changes in normalized query/key projections. Effective rank, thresholded support, and spectral similarity are introduced as descriptive quantities or candidate intervention signals that organize how the spectrum can guide subsequent verification.

\subsection{The SVDA Spectrum as a Control Surface}
\label{subsec:spectrum_control_surface}

Let
\begin{equation}
S_{\Sigma}=\frac{Q\Sigma K^{\top}}{\sqrt{d_k}}
\label{eq:svda_scores}
\end{equation}
denote the pre-softmax SVDA score matrix. The attention matrix is then
\begin{equation}
A_{\Sigma}=\operatorname{softmax}(S_{\Sigma})
\end{equation}
where the softmax is applied row-wise.

Because $Q$ and $K$ are row-normalized, each query vector $q_i$ and key vector $k_j$ satisfies
\begin{equation}
\|q_i\|_2 = \|k_j\|_2 = 1
\end{equation}
The corresponding score between tokens $i$ and $j$ is therefore
\begin{equation}
S_{\Sigma,ij} = \frac{1}{\sqrt{d_k}} q_i^{\top}\Sigma k_j = \frac{1}{\sqrt{d_k}} \sum_{r=1}^{d_k} \sigma_r q_{ir}k_{jr}
\label{eq:score_component}
\end{equation}
where \eqref{eq:score_component} makes explicit why $\Sigma$ can be treated as a control surface. Each $\sigma_r$ modulates the contribution of one latent direction to every query-key score. Increasing, shrinking, masking, or regularizing $\sigma_r$ has a direct effect on the attention operator. This differs from post-hoc interpretation of a trained attention map: in SVDA, the interpretable structure is part of the operator itself.

We therefore define an \emph{SVDA spectral intervention} as any transformation
\begin{equation}
\mathcal{T}:\Sigma\mapsto\widetilde{\Sigma}
\end{equation}
that replaces the learned spectrum $\Sigma$ by a modified spectrum $\widetilde{\Sigma}$, producing the intervened attention operator
\begin{equation}
A_{\widetilde{\Sigma}} = \operatorname{softmax} \left(\frac{Q\widetilde{\Sigma}K^{\top}}{\sqrt{d_k}}\right)
\label{eq:intervened_attention}
\end{equation}
Examples include threshold pruning, rank-targeted modulation, sparsity-inducing shrinkage, and robustness-oriented spectral smoothing. The remainder of this section formalizes these interventions and provides elementary guarantees that connect them to the behavior of the attention operator.

\subsection{Spectral Pruning}
\label{subsec:spectral_pruning}

A direct use of the learned SVDA spectrum is to identify latent directions whose contribution to attention is negligible. Given a threshold $\tau>0$, a binary mask is defined for each spectral direction $r=1,\ldots,d_k$ as
\begin{equation}
m_r^{(\tau)} =
\begin{cases}
1, & |\sigma_r|\geq \tau\\
0, & |\sigma_r|<\tau
\end{cases}
\label{eq:mask}
\end{equation}
and the pruned spectrum
\begin{equation}
\Sigma_{\tau} = \operatorname{diag}\left(m_1^{(\tau)}\sigma_1,\ldots, m_{d_k}^{(\tau)}\sigma_{d_k} \right)
\label{eq:pruned_sigma}
\end{equation}
The resulting pruned SVDA attention is
\begin{equation}
A_{\Sigma_{\tau}} = \operatorname{softmax} \left( \frac{Q\Sigma_{\tau}K^{\top}}{\sqrt{d_k}} \right)
\label{eq:pruned_attention}
\end{equation}

The intuition is simple: if a spectral direction has very small magnitude, then removing it should only weakly perturb the attention scores. The following result makes this precise at the pre-softmax level.

\begin{theorem}[Bounded score perturbation under spectral pruning]
\label{thm:pruning_score_bound}
Let $Q,K\in\mathbb{R}^{n\times d_k}$ have row-normalized rows, and let $\Sigma$ and $\Sigma_{\tau}$ be defined as in \eqref{eq:pruned_sigma}. Let
\begin{equation}
S_{\Sigma}=\frac{Q\Sigma K^{\top}}{\sqrt{d_k}},\qquad S_{\Sigma_{\tau}}= \frac{Q\Sigma_{\tau}K^{\top}}{\sqrt{d_k}}
\end{equation}
Then, for every pair of tokens $i,j$,
\begin{equation}
|S_{\Sigma,ij}-S_{\Sigma_{\tau},ij}| \leq \frac{\|\Sigma-\Sigma_{\tau}\|_2}{\sqrt{d_k}}
\label{eq:score_bound_general}
\end{equation}
Moreover, since $\Sigma-\Sigma_{\tau}$ is diagonal and contains only pruned entries,
\begin{equation}
|S_{\Sigma,ij}-S_{\Sigma_{\tau},ij}| \leq \frac{\tau}{\sqrt{d_k}}
\label{eq:score_bound_tau}
\end{equation}
\end{theorem}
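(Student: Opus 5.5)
The plan is to write the score difference entrywise and reduce it to a bilinear form in unit vectors. By linearity of the score map in the spectrum,
\[
S_{\Sigma,ij}-S_{\Sigma_{\tau},ij}=\frac{1}{\sqrt{d_k}}\,q_i^{\top}(\Sigma-\Sigma_{\tau})k_j ,
\]
where $q_i,k_j$ are the $i$-th and $j$-th rows of $Q$ and $K$. Applying Cauchy--Schwarz and then the definition of the operator norm gives
\[
|q_i^{\top}(\Sigma-\Sigma_{\tau})k_j|\le \|q_i\|_2\,\|(\Sigma-\Sigma_{\tau})k_j\|_2\le \|q_i\|_2\,\|\Sigma-\Sigma_{\tau}\|_2\,\|k_j\|_2 .
\]
Row normalization gives $\|q_i\|_2=\|k_j\|_2=1$, and this yields \eqref{eq:score_bound_general} after dividing by $\sqrt{d_k}$.

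For \eqref{eq:score_bound_tau}, I use the structure of the difference matrix. From \eqref{eq:mask}--\eqref{eq:pruned_sigma}, the matrix $\Sigma-\Sigma_{\tau}=\operatorname{diag}\bigl((1-m_r^{(\tau)})\sigma_r\bigr)_{r=1}^{d_k}$ is diagonal. Its nonzero entries are exactly the $\sigma_r$ with $|\sigma_r|<\tau$. The spectral norm of a diagonal matrix equals the largest absolute diagonal entry, so $\|\Sigma-\Sigma_{\tau}\|_2=\max_{r:|\sigma_r|<\tau}|\sigma_r|$. This is strictly less than $\tau$ if any entry is pruned, and equals $0$ otherwise; in either case it is at most $\tau$. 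Substituting into \eqref{eq:score_bound_general} gives the claim.

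There is no genuine obstacle; the only points needing care are bookkeeping ones. First, the $\sigma_r$ may be signed, so absolute values must be used in the diagonal-norm identity. Second, the empty-pruning case should be noted so that the maximum is well defined. Third, I might remark on a sharper alternative: bounding the sum $\sum_{r\,\text{pruned}}|\sigma_r||q_{ir}||k_{jr}|$ directly via Cauchy--Schwarz over the pruned coordinates gives the same $\tau$ bound. That remark confirms the estimate without relying on the operator-norm identity.
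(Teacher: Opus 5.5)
Your proposal is correct and follows the paper's proof essentially step for step: you write the entrywise difference as $\frac{1}{\sqrt{d_k}}q_i^{\top}(\Sigma-\Sigma_{\tau})k_j$, bound it by $\|q_i\|_2\|\Sigma-\Sigma_{\tau}\|_2\|k_j\|_2$ using unit-norm rows, and then use that the spectral norm of the diagonal difference is the largest removed $|\sigma_r|$, which is at most $\tau$. Your extra care about signed coefficients and the empty-pruning case is a harmless refinement that the paper leaves implicit.
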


\begin{proof}
For any token pair $(i,j)$, let $q_i,k_j\in\mathbb{R}^{d_k}$ denote the column-vector representations of the corresponding rows of $Q$ and $K$. Since the rows of $Q$ and $K$ are normalized, $\|q_i\|_2=\|k_j\|_2=1$. The $(i,j)$ entry of the score matrix is
\begin{equation}
S_{\Sigma,ij} = \frac{1}{\sqrt{d_k}}q_i^{\top}\Sigma k_j
\end{equation}
Similarly,
\begin{equation}
S_{\Sigma_{\tau},ij} = \frac{1}{\sqrt{d_k}}q_i^{\top}\Sigma_{\tau} k_j
\end{equation}
Therefore,
\begin{equation}
S_{\Sigma,ij}-S_{\Sigma_{\tau},ij} = \frac{1}{\sqrt{d_k}} q_i^{\top}(\Sigma-\Sigma_{\tau})k_j
\end{equation}
By the definition of the spectral norm,
\begin{equation}
|q_i^{\top}(\Sigma-\Sigma_{\tau})k_j| \leq \|q_i\|_2 \|\Sigma-\Sigma_{\tau}\|_2 \|k_j\|_2
\end{equation}
Using $\|q_i\|_2=\|k_j\|_2=1$ gives \eqref{eq:score_bound_general}. Since $\Sigma-\Sigma_{\tau}$ is diagonal and contains only the pruned entries, its spectral norm is the maximum absolute removed diagonal entry:
\begin{equation}
\|\Sigma-\Sigma_{\tau}\|_2 = \max_{r:m_r^{(\tau)}=0}|\sigma_r| \leq \tau 
\end{equation}
Substitution yields \eqref{eq:score_bound_tau}.
\end{proof}

Theorem~\ref{thm:pruning_score_bound} provides the first formal justification for SVDA-guided pruning. It shows that pruning is controlled by the magnitude of the removed spectral directions. Thus, when the SVDA diagnostic identifies consistently weak spectral components, their removal has a bounded effect on every pre-softmax attention score. Because downstream layers may amplify or compensate for score changes, the bound is best understood as a controlled-operator statement. It justifies the intervention as a structured modification of attention scores.

The experimental rule retains spectral energy rather than applying a fixed threshold. The corresponding bound follows directly. Let $\mathcal{K}$ be any retained set satisfying
\begin{equation}
\sum_{r\in\mathcal{K}}\sigma_r^2 \geq \rho\sum_{r=1}^{d_k}\sigma_r^2,
\end{equation}
and let $\Sigma_{\mathcal{K}}$ set entries outside $\mathcal{K}$ to zero. Then
\begin{equation}
\|\Sigma-\Sigma_{\mathcal{K}}\|_F^2
=\sum_{r\notin\mathcal{K}}\sigma_r^2
\leq (1-\rho)\|\Sigma\|_F^2.
\end{equation}
Because the spectral norm is bounded by the Frobenius norm, Theorem~\ref{thm:pruning_score_bound} gives
\begin{equation}
|S_{\Sigma,ij}-S_{\Sigma_{\mathcal{K}},ij}|
\leq \frac{\sqrt{1-\rho}\,\|\Sigma\|_F}{\sqrt{d_k}}.
\label{eq:energy_retention_score_bound}
\end{equation}
This operator-level bound connects the tested retention rule directly to the resulting score perturbation.

\subsection{Attention Perturbation After Pruning}
\label{subsec:softmax_pruning}

The previous theorem bounds the perturbation of the pre-softmax score matrix. Since the attention matrix is obtained through a row-wise softmax, the same perturbation should also be related to controlled changes in the resulting attention distribution.

\begin{proposition}[Controlled attention perturbation under spectral pruning]
\label{prop:softmax_pruning_bound}
Let $A_{\Sigma}$ and $A_{\Sigma_{\tau}}$ be the row-wise softmax attention matrices generated from $S_{\Sigma}$ and $S_{\Sigma_{\tau}}$. If no individual attention score changes by more than $\epsilon$ after pruning
\begin{equation}
\|S_{\Sigma}-S_{\Sigma_{\tau}}\|_{\infty} \leq \epsilon
\end{equation}
where $\|\cdot\|_{\infty}$ denotes the entrywise maximum norm as
\begin{equation}
\|M\|_{\infty}=\max_{i,j}|M_{ij}|
\end{equation}
then for each attention row $i$
\begin{equation}
\|A_{\Sigma,i}-A_{\Sigma_{\tau},i}\|_1 \leq \epsilon
\label{eq:softmax_lipschitz_bound}
\end{equation}
That is, row-wise softmax is $1$-Lipschitz from $\ell_\infty$ to $\ell_1$. In particular, Theorem~\ref{thm:pruning_score_bound} yields
\begin{equation}
\|A_{\Sigma,i}-A_{\Sigma_{\tau},i}\|_1 \leq \frac{\tau}{\sqrt{d_k}}.
\end{equation}
\end{proposition}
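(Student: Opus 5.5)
The plan is to reduce the statement to a single row, since the softmax acts row by row. Fix a row $i$ and write $x=S_{\Sigma,i}$, $y=S_{\Sigma_{\tau},i}\in\mathbb{R}^n$, with $\|x-y\|_\infty\leq\epsilon$. I will show $\|\operatorname{softmax}(x)-\operatorname{softmax}(y)\|_1\leq\|x-y\|_\infty$. The final claim then follows by inserting $\epsilon=\tau/\sqrt{d_k}$ from \eqref{eq:score_bound_tau} of Theorem~\ref{thm:pruning_score_bound}. That theorem bounds every entry of $S_{\Sigma}-S_{\Sigma_{\tau}}$ by $\tau/\sqrt{d_k}$, which is exactly the $\ell_\infty$ hypothesis.

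For the one-row Lipschitz bound I will integrate along a path. Set $v=x-y$ and $z(t)=y+tv$ for $t\in[0,1]$, with $p(t)=\operatorname{softmax}(z(t))$. Softmax is smooth, so
\begin{equation*}
\operatorname{softmax}(x)-\operatorname{softmax}(y)=\int_0^1 J(z(t))\,v\,dt .
\end{equation*}
Here $J(z)=\operatorname{diag}(p)-pp^{\top}$ is the softmax Jacobian. By the triangle inequality for integrals, it suffices to prove the pointwise bound $\|J(z)v\|_1\leq\|v\|_\infty$ for every probability vector $p$ arising as a softmax.

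The key step is computing $J(z)v$ componentwise. Its $k$-th entry is $p_k(v_k-\bar v)$, where $\bar v=\sum_l p_l v_l$. Hence
\begin{equation*}
\|J(z)v\|_1=\sum_k p_k|v_k-\bar v|,
\end{equation*}
which is the mean absolute deviation of a random variable $V$ taking value $v_k$ with probability $p_k$.

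The main obstacle is obtaining the constant $1$ rather than $2$. The crude bound $|v_k-\bar v|\leq 2\|v\|_\infty$, or the equivalent log-ratio argument, only gives $2\epsilon$. To get $\epsilon$ I will use Jensen or Cauchy--Schwarz, $\mathbb{E}|V-\mathbb{E}V|\leq\sqrt{\operatorname{Var}(V)}$. Then I will use the fact that a random variable supported in an interval of length $2\|v\|_\infty$ has variance at most $\|v\|_\infty^2$: indeed $\operatorname{Var}(V)\leq\mathbb{E}V^2\leq\|v\|_\infty^2$, since the variance is at most the second moment about $0$. This gives $\|J(z)v\|_1\leq\|v\|_\infty$ uniformly in $t$. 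Integrating over $[0,1]$ yields \eqref{eq:softmax_lipschitz_bound}, and applying the bound to each row $i$ completes the argument.
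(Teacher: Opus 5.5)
Your proposal is correct and follows the paper's proof essentially step for step. Both arguments write the softmax difference as the integral of the Jacobian $J(z)=\operatorname{diag}(p)-pp^{\top}$ along the segment between the two score rows, identify $\|J(z)v\|_1$ with the mean absolute deviation of a variable supported on $[-\|v\|_\infty,\|v\|_\infty]$, and then apply Theorem~\ref{thm:pruning_score_bound}. The one difference is that you explicitly justify the bound $\mathbb{E}|V-\mathbb{E}V|\leq\sqrt{\operatorname{Var}(V)}\leq\sqrt{\mathbb{E}V^2}\leq\|v\|_\infty$, which the paper only asserts.
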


\begin{proof}
For $p=\operatorname{softmax}(z)$, the Jacobian is $J(z)=\operatorname{diag}(p)-pp^\top$. For any vector $v$ with $\|v\|_\infty\leq 1$, $(Jv)_a=p_a(v_a-\sum_b p_bv_b)$, and hence $\|Jv\|_1$ is the mean absolute deviation of a random variable supported on $[-1,1]$. This is at most $1$, so $\|J(z)\|_{\infty\rightarrow 1}\leq 1$. Integrating the Jacobian along the line segment between the two score rows proves \eqref{eq:softmax_lipschitz_bound}. The final inequality follows from Theorem~\ref{thm:pruning_score_bound}.
\end{proof}

This proposition connects spectral pruning to the actual attention distribution. Pruning small spectral directions induces a bounded and analyzable attention-level change, which is the relevant guarantee for diagnosis-guided intervention on the score pathway.

\subsection{Effective Rank as a Controllable Attention Dimension}
\label{subsec:effective_rank_control}

SVDA also enables a continuous notion of how many latent directions are effectively used by an attention head. Given the diagonal spectrum $\Sigma=\operatorname{diag}(\sigma_1,\ldots,\sigma_{d_k})$, define the normalized spectral energy distribution
\begin{equation}
p_r = \frac{\sigma_r^2}{\sum_{s=1}^{d_k}\sigma_s^2}, \qquad r=1,\ldots,d_k
\label{eq:spectral_energy}
\end{equation}
assuming $\Sigma\neq 0$. The spectral entropy is
\begin{equation}
H(\Sigma) = -\sum_{r=1}^{d_k}p_r\log p_r
\label{eq:spectral_entropy}
\end{equation}
and the effective rank is
\begin{equation}
r_{\mathrm{eff}}(\Sigma) = e^{H(\Sigma)}
\label{eq:effective_rank}
\end{equation}
This quantity provides a differentiable proxy for the number of active spectral directions. It is not a hard rank in the linear-algebraic sense. Rather, it measures how concentrated or diffuse the spectral energy is.

\begin{remark}[Range and interpretation of effective rank]
\label{rem:effective_rank_bounds}
For any nonzero diagonal spectrum
$\Sigma\in\mathbb{R}^{d_k\times d_k}$, the normalized spectral
energy values in \eqref{eq:spectral_energy} form a probability
distribution. Consequently,
\begin{equation}
1 \leq r_{\mathrm{eff}}(\Sigma) \leq d_k.
\label{eq:rank_bounds}
\end{equation}
The lower endpoint occurs when all spectral energy is concentrated
in one direction, whereas the upper endpoint occurs when the energy
is uniformly distributed across all $d_k$ directions. These bounds
are a direct consequence of
$0\leq H(\Sigma)\leq\log d_k$ and are included to clarify the scale
of the diagnostic.
\end{remark}

This result formalizes the interpretation of effective rank as an attention-dimensionality indicator. In diagnostic SVDA, $r_{\mathrm{eff}}$ reveals whether a head uses many latent directions or only a few dominant ones. In prescriptive SVDA, the same quantity can be used to guide intervention. For example, a head with consistently low effective rank may be a candidate for dimensional compression, while a head whose effective rank collapses too early may indicate premature specialization.

One possible extension is to use effective rank as a training target through
\begin{equation}
\mathcal{L}_{\mathrm{rank}}=\left(r_{\mathrm{eff}}(\Sigma)-r^{\star}\right)^2,
\label{eq:rank_loss}
\end{equation}
where $r^{\star}$ denotes a chosen target effective rank. This objective defines a testable mechanism for encouraging spectral energy to occupy a specified range and provides one natural extension of the prescriptive SVDA framework.

\subsection{Spectral Sparsity and Compressibility}
\label{subsec:sparsity_compressibility}

While effective rank measures the soft dimensionality of spectral usage, spectral sparsity measures the proportion of latent directions that are effectively inactive. For a threshold $\epsilon>0$, define
\begin{equation}
P_{\epsilon}(\Sigma) = \frac{1}{d_k} \left| \{r:|\sigma_r|<\epsilon\} \right|
\label{eq:spectral_sparsity}
\end{equation}
The complementary quantity
\begin{equation}
C_{\epsilon}(\Sigma) = \left| \{r:|\sigma_r|\geq\epsilon\} \right|
\label{eq:compressible_dimension}
\end{equation}
is the number of active spectral directions above threshold. This number defines the effective compressed dimension of the head after thresholding.

\begin{remark}[Support induced by threshold pruning]
\label{rem:spectral_support}
Let $\Sigma_{\epsilon}$ be obtained by setting every spectral
coefficient satisfying $|\sigma_r|<\epsilon$ to zero. By construction,
the corresponding score matrix can be written as
\begin{equation}
S_{\Sigma_{\epsilon},ij}
=
\frac{1}{\sqrt{d_k}}
\sum_{r:|\sigma_r|\geq\epsilon}
\sigma_r q_{ir}k_{jr}.
\label{eq:thresholded_score_support}
\end{equation}
It therefore depends on at most
$C_{\epsilon}(\Sigma)$ active score directions. This identity defines
the structural support exposed by thresholding and identifies the score
coordinates retained by a thresholded intervention.
\end{remark}

The thresholded support provides an explicit candidate dimension for
the masked score operator. Converting that support into a physically
smaller implementation additionally requires specifying the
normalization and scaling used after coordinate selection. Predictive
performance and practical efficiency must be established empirically.

\subsection{Robustness-Guided Spectral Intervention}
\label{subsec:robustness_intervention}

SVDA also provides a natural way to reason about attention robustness. Let $x$ be an input and $x+\delta$ a perturbed version of the same input. The diagnostic perturbation response of an attention head can be measured as
\begin{equation}
\Delta_A(x,\delta) = \|A_{\Sigma}(x)-A_{\Sigma}(x+\delta)\|_F
\label{eq:perturbation_response}
\end{equation}
A large value of $\Delta_A$ indicates that the attention distribution changes substantially under a small input perturbation. This can be used diagnostically to identify unstable layers or heads. It can also motivate a targeted intervention, such as fine-tuning with a robustness penalty:
\begin{equation}
\mathcal{L}_{\mathrm{rob}} = \|A_{\Sigma}(x)-A_{\Sigma}(x+\delta)\|_F^2
\label{eq:robustness_loss}
\end{equation}
The following proposition states why the magnitude of $\Sigma$ matters for robustness.

\begin{proposition}[Spectral norm control of score sensitivity]
\label{prop:score_sensitivity}
Let
\begin{equation}
S_{\Sigma}(x) = \frac{Q(x)\Sigma K(x)^{\top}}{\sqrt{d_k}}
\end{equation}
be the SVDA score matrix. For a perturbed input $x+\delta$, assume that the normalized query and key matrices change to $Q'=Q+\Delta Q$ and $K'=K+\Delta K$. Then
\begin{equation}
\|S_{\Sigma}(x+\delta)-S_{\Sigma}(x)\|_F \leq \frac{\|\Sigma\|_2}{\sqrt{d_k}} \left( \|\Delta Q\|_F\|K'\|_2 + \|Q\|_2\|\Delta K\|_F \right)
\label{eq:sensitivity_bound}
\end{equation}
Thus, for fixed projection perturbations, attention-score sensitivity is controlled by the spectral norm of $\Sigma$.
\end{proposition}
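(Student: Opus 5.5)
The plan is to write the score difference as a telescoping sum that isolates the change in $Q$ from the change in $K$, and then bound each piece with submultiplicativity of norms. Writing $Q'=Q+\Delta Q$ and $K'=K+\Delta K$, we have
\begin{equation*}
\sqrt{d_k}\,\bigl(S_{\Sigma}(x+\delta)-S_{\Sigma}(x)\bigr) = Q'\Sigma K'^{\top}-Q\Sigma K^{\top} = \Delta Q\,\Sigma K'^{\top} + Q\Sigma\,\Delta K^{\top}.
\end{equation*}
This follows by adding and subtracting $Q\Sigma K'^{\top}$: the first difference is $(Q'-Q)\Sigma K'^{\top}$ and the second is $Q\Sigma(K'-K)^{\top}$. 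The order of the split matters. It determines which factor carries the perturbed matrix $K'$ and which carries the unperturbed $Q$, and the stated bound pairs $\|\Delta Q\|_F$ with $\|K'\|_2$ and $\|Q\|_2$ with $\|\Delta K\|_F$. So I will insert $Q\Sigma K'^{\top}$ rather than $Q'\Sigma K^{\top}$.

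Next, apply the triangle inequality in the Frobenius norm and use the mixed inequality $\|AB\|_F\le \|A\|_F\|B\|_2$, together with its transposed form $\|AB\|_F\le\|A\|_2\|B\|_F$. For the first term,
\begin{equation*}
\|\Delta Q\,\Sigma K'^{\top}\|_F \le \|\Delta Q\|_F\,\|\Sigma\|_2\,\|K'^{\top}\|_2 = \|\Delta Q\|_F\,\|\Sigma\|_2\,\|K'\|_2 .
\end{equation*}
For the second term,
\begin{equation*}
\|Q\Sigma\,\Delta K^{\top}\|_F \le \|Q\|_2\,\|\Sigma\|_2\,\|\Delta K^{\top}\|_F = \|Q\|_2\,\|\Sigma\|_2\,\|\Delta K\|_F .
\end{equation*}
These steps use only the invariance of both norms under transposition. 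Adding the two bounds, factoring out $\|\Sigma\|_2$, and dividing by $\sqrt{d_k}$ gives \eqref{eq:sensitivity_bound}.

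The only step requiring any justification is the mixed Frobenius–spectral inequality. I would state it briefly: the Frobenius norm squared is the sum of squared row norms (or column norms), and each row of $AB$ is $a_i^{\top}B$ with $\|a_i^{\top}B\|_2\le\|a_i\|_2\|B\|_2$. I expect no genuine obstacle; the main care point is the bookkeeping above, so that the right factors end up with primes.

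Finally, I would remark that row normalization is not needed for the inequality itself. It only keeps $\|Q\|_2,\|K'\|_2\le\sqrt{n}$ bounded, which makes the closing interpretation meaningful: for fixed projection perturbations, sensitivity scales linearly with $\|\Sigma\|_2=\max_r|\sigma_r|$.
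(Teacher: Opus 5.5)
Your proposal is correct and follows the paper's proof: adding and subtracting $Q\Sigma K'^{\top}$, applying the triangle inequality, and bounding each term with the mixed Frobenius--spectral submultiplicativity inequalities. Your short justification of $\|AB\|_F\le\|A\|_F\|B\|_2$ and your observation that row normalization is not needed for the inequality itself are both correct; the paper uses these points without stating them.
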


\begin{proof}
We write
\begin{equation}
S_{\Sigma}(x+\delta)-S_{\Sigma}(x) = \frac{1}{\sqrt{d_k}} \left( Q'\Sigma K'^{\top} - Q\Sigma K^{\top} \right)
\end{equation}
Adding and subtracting $Q\Sigma K'^{\top}$ gives
\begin{equation}
Q'\Sigma K'^{\top} - Q\Sigma K^{\top} = (Q'-Q)\Sigma K'^{\top} + Q\Sigma(K'-K)^{\top}
\end{equation}
Therefore,
\begin{equation}
\|S_{\Sigma}(x+\delta)-S_{\Sigma}(x)\|_F \leq \frac{1}{\sqrt{d_k}} \left( \|\Delta Q\Sigma K'^{\top}\|_F + \|Q\Sigma \Delta K^{\top}\|_F \right)
\end{equation}
Using submultiplicativity of matrix norms,
\begin{equation}
\|\Delta Q\Sigma K'^{\top}\|_F \leq \|\Delta Q\|_F\|\Sigma\|_2\|K'\|_2
\end{equation}
and
\begin{equation}
\|Q\Sigma\Delta K^{\top}\|_F \leq \|Q\|_2\|\Sigma\|_2\|\Delta K\|_F
\end{equation}
Combining the two inequalities proves \eqref{eq:sensitivity_bound}.
\end{proof}
Proposition~\ref{prop:score_sensitivity} isolates one factor in score sensitivity: for fixed query and key perturbations, the bound scales with $\|\Sigma\|_2$. This motivates robustness-guided spectral intervention as a direct extension of the same control-surface view.

\subsection{Head Redundancy Through Spectral Similarity}
\label{subsec:head_redundancy}

SVDA also provides a compact representation of each attention head through its learned spectrum. For layer $\ell$ and head $h$, let
\begin{equation}
\boldsymbol{\sigma}_{\ell,h} = (\sigma_{\ell,h,1},\ldots,\sigma_{\ell,h,d_k})
\end{equation}
denote the vector of spectral coefficients. A simple spectral redundancy score between two heads $h_1$ and $h_2$ in the same layer is
\begin{equation}
R_{\ell}(h_1,h_2) = \frac{ |\boldsymbol{\sigma}_{\ell,h_1}|^{\top} |\boldsymbol{\sigma}_{\ell,h_2}| }{ \| \boldsymbol{\sigma}_{\ell,h_1}\|_2 \| \boldsymbol{\sigma}_{\ell,h_2}\|_2 }
\label{eq:head_redundancy}
\end{equation}
High spectral similarity is a compact screening signal for candidate head redundancy, while functional redundancy also depends on the query, key, value, and output projections. Empirical output similarity or verified head removal can then turn this screen into a redundancy decision, organizing a spectrum-guided workflow for head-level intervention.

\subsection{Summary of the Prescriptive SVDA Principle}
\label{subsec:prescriptive_summary}

The proposed extension of SVDA rests on a simple principle: because the diagonal spectrum $\Sigma$ is part of the attention-score operator, transformations of $\Sigma$ define explicit interventions on score formation. The formal analysis establishes two operator-level bounds relevant to this view:
\begin{itemize}
\item removing sufficiently small spectral coefficients produces a bounded perturbation of each pre-softmax attention score;
\item for fixed changes in the query and key projections, score sensitivity is bounded in part by the spectral norm of $\Sigma$.
\end{itemize}

The remaining quantities play descriptive or operational roles that structure the intervention space. Effective rank summarizes the concentration of spectral energy, thresholding defines the active support of the masked score operator, and spectral similarity screens candidate pairs of heads for later redundancy verification.

The empirical study below instantiates the framework through spectral energy retention. It tests whether low-energy directions can be masked and represented by a dimension-reduced query--spectrum--key pathway while retaining similar predictive behavior.

\section{Empirical Evaluation of Spectral Energy Retention}
\label{sec:empirical_validation}

The preceding sections define several ways in which the explicit SVDA spectrum could guide intervention, and the empirical evaluation focuses on one: spectral energy retention in the query--spectrum--key score pathway. The objective is to test whether directions assigned low spectral energy can be masked with small changes in predictive performance and then represented by a physically smaller score pathway. The experiments distinguish the masked operator from the dimension-reduced implementation and quantify their agreement under the stated normalization conventions.

The validation focuses on the learned spectral energy of the SVDA diagonal modulation. For each attention head, the learned SVDA spectrum defines an energy distribution over latent score directions. The energy of direction $r$ in head $h$ is defined as
\begin{equation}
E_{h,r}=\sigma_{h,r}^{2}
\end{equation}
This quantity measures the squared magnitude of the learned coefficient assigned to direction $r$ in the score operator. It orders the explicit spectral coefficients, while the realized contribution of a direction also depends on its query and key activations and on downstream network sensitivity. Accordingly, ``high-energy'' and ``low-energy'' refer to the relative values of $\sigma_{h,r}^{2}$ within a head. 
The prescription uses a relative energy-retention rule rather than the absolute value of $\sigma_{h,r}$ or a fixed numerical cutoff such as $\sigma_{h,r}<0.90$. For a prescribed retention ratio $\rho \in (0,1]$, directions within each head are sorted by decreasing energy. The retained set of score-direction indices for head $h$ is denoted by $\mathcal{K}_{h}\subseteq\{1,\ldots,d_k\}$ and is chosen as the smallest set satisfying
\begin{equation}
\frac{\sum_{r \in \mathcal{K}_{h}} E_{h,r}}{\sum_{r=1}^{d_k} E_{h,r}} \geq \rho
\end{equation}
The complement $\{1,\ldots,d_k\}\setminus\mathcal{K}_{h}$ is then removed from the score pathway. Thus, $\rho$ is scale-independent only as a relative within-head retention rule: it specifies the fraction of learned spectral energy to preserve instead of a raw threshold applied to $\sigma$. The absolute score perturbation is not determined by $\rho$ alone. As \eqref{eq:energy_retention_score_bound} shows, the bound also scales with $\|\Sigma\|_F$, so the same value of $\rho$ across two heads or models does not imply the same absolute perturbation. The retained energy fraction remains meaningful as a relative prescription even when the magnitude of the learned $\sigma$ values varies with dataset, optimization trajectory, initialization, or architecture scale.

The empirical protocol separates three related but distinct objects: (a) the original SVDA model, (b) the masked SVDA model, and (c) the dimension-reduced SVDA model. The original model is trained normally with the SVDA attention operator. The masked model sets low-energy entries of the effective diagonal spectrum to zero while leaving the original dense query, key, value, and output projections intact. Comparing its accuracy with the original model tests the empirical effect of masking, while structural savings are realized by the dimension-reduced model. This model physically removes the selected directions from the query projection, key projection, and diagonal spectral modulation while keeping the value and output projections full. This isolates the intervention to score formation: compressing the value or output pathways would change the represented features and would constitute a different intervention.

The resulting intervention can be summarized as follows. For each trained SVDA model and head $h$, the set $\mathcal{K}_{h}$ is computed from the learned $\sigma$ values using $\rho$. The corresponding query and key coordinates and spectral entries are copied into a smaller module; the value and output projections remain full. This produces a dimension-reduced score pathway; the exact relation to masking depends on the normalization convention, which is specified next.

To make this distinction explicit, let $P_{\mathcal{K}_h}$ select the retained coordinates. If $Q$ and $K$ are normalized in the original $d_h$-dimensional space, then the exact coordinate-selection realization is
\begin{equation}
S_h^{\mathrm{exact}}
=\frac{(QP_{\mathcal{K}_h}^{\top})
(P_{\mathcal{K}_h}\Sigma P_{\mathcal{K}_h}^{\top})
(KP_{\mathcal{K}_h}^{\top})^{\top}}{\sqrt{d_h}},
\end{equation}
which is algebraically identical to masking. By contrast, renormalizing the selected rows in the reduced space or replacing $\sqrt{d_h}$ by $\sqrt{k_h}$, where $k_h=|\mathcal{K}_h|$, defines a different operator whose scores can differ from the masked scores. The bounds in Section~\ref{sec:diagnostic_to_prescriptive} apply directly to the masked operator and to the exact coordinate-selection equation above; the reduced-space variant is evaluated empirically because it changes the normalization geometry.

The implementation therefore reports two compressed variants. The first is the norm-preserving diagnostic above: it physically copies the retained query/key rows and spectral entries, but it computes the normalization denominators from the original full-dimensional query/key projections and keeps the original $\sqrt{d_h}$ score scale. This diagnostic matches the masked operator up to floating-point tolerance and is reported as a diagnostic because it retains auxiliary full-dimensional projections for normalization. The second is the deployable reduced-space variant: it copies the retained query/key rows and spectral entries into a smaller dense projection tensor, pads heads to the largest retained width needed for batched computation, applies the retained-coordinate mask, recomputes row normalization within that compressed projection tensor, and keeps the original $\sqrt{d_h}$ score scale. Both variants are used without post-compression fine-tuning. The second variant is the structurally reduced model used for parameter and MAC accounting, and its closeness to the masked prescription is measured empirically.

To quantify the deviation of the implemented dimension-reduced model from the mask, let $f_{\mathrm{masked}}(x)$ and $f_{\mathrm{compressed}}(x)$ denote their output logits. For the same test batches, the relative logit discrepancy is defined as
\begin{equation}
\Delta_{\mathrm{rel}} = \frac{ \lVert f_{\mathrm{masked}}(x)-f_{\mathrm{compressed}}(x)\rVert_{2} }{ \lVert f_{\mathrm{masked}}(x)\rVert_{2}+\epsilon }
\end{equation}
where $\epsilon>0$ is a small numerical stabilizer. Prediction agreement is defined as
\begin{equation}
A_{\mathrm{pred}} = \frac{1}{N} \sum_{i=1}^{N} \mathbb{I} \left[ \arg\max f_{\mathrm{masked}}(x_i) = \arg\max f_{\mathrm{compressed}}(x_i) \right]
\end{equation}
where $N$ is the number of evaluated test samples and $\mathbb{I}[\cdot]$ is the indicator function. Nonzero discrepancy quantifies the operator change introduced by reduced-space normalization; prediction agreement measures practical closeness between the two realizations.

The experiments were conducted on FashionMNIST, CIFAR-10, CIFAR-100, and Food-101~\cite{bossard2014food101}. FashionMNIST, CIFAR-10, and CIFAR-100 use a compact SVDA-ViT configuration, using $32\times32$ configured inputs, patch size 4, four Transformer blocks, four attention heads, embedding dimension 256, batch size 256, and seeds 42, 43, and 44. FashionMNIST and CIFAR-10 were trained for 80 epochs, and CIFAR-100 was trained for 100 epochs. Food-101 was added as a larger natural-image validation experiment, providing 224$\times$224 inputs, patch size 16, 196 visual tokens, four Transformer blocks, four attention heads, embedding dimension 256, batch size 64, 40 training epochs, and the same three seeds. The same score-structural evaluation logic was used for all datasets. All trainable components, including the query, key, value, output, MLP, classifier, and learned spectral coefficients, were optimized jointly from scratch before any intervention was applied.

The primary prescription was the conservative energy-retention rule with $\rho=0.90$, which keeps the smallest number of score directions required to preserve at least 90\% of the learned spectral energy in each head. For every trained model, four quantities were measured: classification accuracy, percentage of score directions removed, reduction in trainable parameters, and reduction in estimated multiply-accumulate operations (MACs). The parameter and MAC reductions are measured relative to the original SVDA Transformer. The MAC estimate counts the projection and attention operations implied by the implemented Transformer blocks and is used as an architecture-level arithmetic estimate rather than as a hardware latency proxy. Because the structural realization compresses only the score pathway and keeps the value/output pathway full, the expected parameter and MAC reductions are modest but structurally meaningful. The experiment tests whether the spectral diagnosis can be translated into meaningful structural simplification while isolating the intervention to the score-forming geometry.

Matched controls were also constructed. For each SVDA energy-retention prescription, two matched masked controls were generated with the same pruning ratio. The random-matched control removes the same number of score directions randomly within the corresponding heads. It is a sanity control for whether mild removal is tolerated without spectral ordering. The largest-$\Sigma$-matched control removes the highest-energy directions instead of the lowest-energy ones. In \tablename~\ref{tab:main-absolute-accuracy}, Random and Largest-$\Sigma$ refer to these matched masked controls. Because masking and deployable reduced-space realization are distinct operators, an additional matched random reduced-space control was also evaluated: for each trained seed, 50 random retained-coordinate sets were converted into the same deployable reduced-space operator as the SVDA prescription, using the same retained counts, normalization convention, padding implementation, and no post-compression fine-tuning. These controls test the learned coefficient ordering under both the functional masked intervention and the deployed reduced-space realization, while the ablation study later examines how coefficient energy relates to single-coordinate functional effects.

The main results for $\rho=0.90$ are shown in \tablename~\ref{tab:main-absolute-accuracy} and \tablename~\ref{tab:main-score-structural}. The accuracy table reports absolute accuracies rather than only deltas. Across the four datasets, the paired mean accuracy change of the deployable dimension-reduced SVDA model relative to its original checkpoint is $+0.003$, $+0.050$, $-0.003$, and $-0.025$ percentage points for FashionMNIST, CIFAR-10, CIFAR-100, and Food-101, respectively. These changes are small relative to the across-seed standard deviations in \tablename~\ref{tab:main-absolute-accuracy}, supporting empirical accuracy preservation in these settings.

\begin{table}[t]
\centering
\caption{Absolute accuracies for the $\rho=0.90$ SVDA intervention. Values are mean $\pm$ sample standard deviation over seeds.}
\label{tab:main-absolute-accuracy}
\small
\begin{tabular}{lccccc}
\toprule
Dataset & Original & Masked & Reduced & Random mask & Largest-$\Sigma$ \\
\midrule
FashionMNIST & $87.06\pm0.35$ & $87.08\pm0.34$ & $87.07\pm0.32$ & $86.65\pm0.36$ & $83.08\pm0.49$ \\
CIFAR-10     & $71.84\pm0.41$ & $71.84\pm0.41$ & $71.89\pm0.41$ & $71.45\pm0.49$ & $69.50\pm0.92$ \\
CIFAR-100    & $40.79\pm0.61$ & $40.80\pm0.63$ & $40.78\pm0.66$ & $39.87\pm0.64$ & $35.90\pm0.24$ \\
Food-101     & $38.03\pm0.50$ & $38.02\pm0.51$ & $38.00\pm0.47$ & $36.60\pm0.56$ & $34.47\pm1.07$ \\
\bottomrule
\end{tabular}
\end{table}

\begin{table}[t]
\centering
\caption{Score-pathway structural reduction and reduced-space diagnostics for $\rho=0.90$. Structural values are mean $\pm$ sample standard deviation over seeds.}
\label{tab:main-score-structural}
\small
\begin{tabular}{lccccc}
\toprule
Dataset & Dir. pruned & Params $\downarrow$ & MACs $\downarrow$ & Rel. L2 & Pred. agree \\
\midrule
FashionMNIST & $24.74\pm0.41$\% & $2.69\pm0.30$\% & $2.93\pm0.33$\% & 0.0096 & 99.72\% \\
CIFAR-10     & $24.51\pm0.93$\% & $2.59\pm0.26$\% & $2.83\pm0.28$\% & 0.0030 & 99.81\% \\
CIFAR-100    & $29.36\pm0.78$\% & $3.78\pm0.15$\% & $4.14\pm0.16$\% & 0.0071 & 99.33\% \\
Food-101     & $53.68\pm3.71$\% & $4.30\pm0.56$\% & $5.38\pm0.70$\% & 0.0223 & 97.66\% \\
\bottomrule
\end{tabular}
\end{table}

The norm-preserving coordinate-selection diagnostic matches the masked model up to floating-point tolerance: the maximum relative logit discrepancy over seeds is at most $1.5\times10^{-8}$ across all datasets, with 100\% prediction agreement. The deployable reduced-space model changes the normalization geometry by recomputing row normalization in the compressed projection tensor; its relative logit discrepancy ranges from 0.0030 to 0.0223 and its prediction agreement ranges from 97.66\% to 99.81\%. These measurements show that the reduced model remains close to the masked prescription while representing a distinct deployable operator.

The repeated random-mask control uses 50 matched random masks per trained seed, with the random draws first summarized within each seed before aggregating across seeds. At $\rho=0.90$, the proposed masked spectral prescription lies at the top of the sampled masked-random distribution for every dataset in these runs: the mean empirical percentile is 1.0 and the mean value of $p(\mathrm{random}\geq\mathrm{spectral})$ is 0.0196. The corresponding random-mask mean accuracy changes are $-0.410$, $-0.391$, $-0.920$, and $-1.429$ percentage points on FashionMNIST, CIFAR-10, CIFAR-100, and Food-101. The matched random reduced-space control is stronger, as expected from the changed normalization geometry, and narrows the gap. \tablename~\ref{tab:main-random-reduced} therefore reports both the mean random reduced-space accuracy and the 5th--95th percentile range of the 50 random reduced-space realizations. The SVDA reduced-space prescription remains above the mean random reduced-space control on all four datasets, with differences of $+0.04$, $+0.05$, $+0.11$, and $+0.44$ percentage points. The empirical probabilities $p(\mathrm{rand}\geq\mathrm{SVDA})$ show that the reduced-space gaps are small on FashionMNIST, CIFAR-10, and CIFAR-100, and clearer on Food-101. This supports the usefulness of the spectral ordering in the present models while distinguishing the functional masked intervention from the deployable reduced-space operator.

\begin{table}[t]
\centering
\caption{Matched random reduced-space control for $\rho=0.90$. Each random retained-coordinate set is converted into the same deployable reduced-space operator as SVDA. The random interval reports the 5th--95th percentile range of the 50 random reduced-space realizations, averaged over seeds.}
\label{tab:main-random-reduced}
\scriptsize
\begin{tabular}{@{}lccccc@{}}
\toprule
Dataset & SVDA reduced & Random mean & Random $q_{05}$--$q_{95}$ & $p(\mathrm{rand}\geq\mathrm{SVDA})$ & SVDA $-$ Random \\
\midrule
FashionMNIST & 87.07 & 87.03 & 86.96--87.10 & 0.222 & +0.04 \\
CIFAR-10     & 71.89 & 71.83 & 71.70--71.95 & 0.261 & +0.05 \\
CIFAR-100    & 40.78 & 40.67 & 40.48--40.85 & 0.203 & +0.11 \\
Food-101     & 38.00 & 37.56 & 37.14--37.90 & 0.046 & +0.44 \\
\bottomrule
\end{tabular}
\end{table}

A second experiment examines how the reported quantities vary with $\rho$. A sensitivity analysis was conducted on CIFAR-10 using $\rho\in\{0.85,0.90,0.95\}$ with three seeds. The results are shown in \tablename~\ref{tab:rho-sensitivity}. Lowering $\rho$ from 0.95 to 0.85 removes more score directions and yields larger parameter and MAC reductions, while the paired mean accuracy change of the reduced-space SVDA model remains between $+0.030$ and $+0.070$ percentage points. The random and largest-$\Sigma$ controls remain below the spectral prescription at all three operating points.

\begin{table}[t]
\centering
\caption{CIFAR-10 sensitivity to the spectral energy-retention parameter $\rho$.}
\label{tab:rho-sensitivity}
\small
\begin{tabular}{lcccccc}
\toprule
$\rho$ & Dir. pruned & Params $\downarrow$ & MACs $\downarrow$ & Reduced $\Delta$ & Random $\Delta$ & Largest $\Delta$ \\
\midrule
0.85 & $32.36\pm1.19$\% & $3.63\pm0.26$\% & $3.96\pm0.28$\% & $+0.070\pm0.017$ & $-0.591\pm0.116$ & $-2.513\pm0.327$ \\
0.90 & $24.51\pm0.93$\% & $2.59\pm0.26$\% & $2.83\pm0.28$\% & $+0.050\pm0.010$ & $-0.391\pm0.094$ & $-2.333\pm0.508$ \\
0.95 & $14.49\pm0.56$\% & $1.38\pm0.30$\% & $1.51\pm0.33$\% & $+0.030\pm0.030$ & $-0.201\pm0.078$ & $-1.960\pm0.588$ \\
\bottomrule
\end{tabular}
\end{table}

The sensitivity study confirms the construction's expected monotonicity: lower $\rho$ removes more score directions and higher $\rho$ removes fewer. The current three operating points position $\rho=0.90$ as a conservative reference point for the main comparison.

To test whether an explicit SVDA spectrum is necessary for score-dimension pruning, a standard-attention baseline was evaluated at the same retained counts induced by the SVDA $\rho=0.90$ prescription. For each standard-attention head, direction $r$ is ranked by the projection-magnitude score $\|W_Q[r]\|_2\|W_K[r]\|_2$, where $W_Q[r]$ and $W_K[r]$ denote the corresponding query and key projection rows. \tablename~\ref{tab:standard-magnitude-baseline} reports this control for FashionMNIST, CIFAR-10, and CIFAR-100.

\begin{table}[t]
\centering
\caption{Standard-attention matched pruning baseline at $\rho=0.90$. Values are mean $\pm$ sample standard deviation over seeds.}
\label{tab:standard-magnitude-baseline}
\small
\begin{tabular}{lccc}
\toprule
Dataset & Original standard & Magnitude-pruned & Random-pruned \\
\midrule
FashionMNIST & $87.35\pm0.14$ & $87.28\pm0.37$ & $86.55\pm0.64$ \\
CIFAR-10     & $73.04\pm0.14$ & $70.80\pm2.17$ & $69.92\pm0.57$ \\
CIFAR-100    & $44.71\pm0.48$ & $42.51\pm1.42$ & $40.21\pm0.53$ \\
\bottomrule
\end{tabular}
\end{table}

This baseline provides an informative comparator. On FashionMNIST, standard-attention magnitude pruning is nearly neutral. On CIFAR-10 and CIFAR-100, it degrades mean accuracy by 2.24 and 2.20 percentage points while still outperforming matched random pruning. Thus, ordinary projection magnitude is a meaningful score-dimension signal. The SVDA contribution is not that $\sigma_r^2$ universally dominates ordinary magnitude as a functional-importance measure, but that the learned spectrum supplies an explicit score-operator coordinate system in which coefficient energy defines a deterministic intervention rule. The functional effect of a retained or removed direction can still depend on query/key activations and downstream sensitivity.

Taken together, the empirical results support a specific operational conclusion: the learned SVDA spectrum can be converted into a deterministic score-pathway intervention; low-energy score directions can be removed with small observed accuracy changes in the evaluated models; the masked prescription can be approximated by a smaller query--spectrum--key pathway; and high-energy directions are consistently more disruptive to remove. This supports the interpretation of $\Sigma$ as an operational control surface for attention, extending SVDA from post-hoc diagnosis to actionable attention interpretability.

The empirical claim concerns structural and arithmetic complexity reduction. The dimension-reduced realization reduces trainable parameters and estimated multiply-accumulate operations by reducing the dimensionality of the score-forming $Q\Sigma K^{\top}$ pathway. These reductions are reported as architecture-level complexity estimates and are independent of hardware-specific kernel efficiency. A hardware-aware implementation would be needed to translate the score-pathway arithmetic savings into optimized latency. If $d_h$ is the original per-head score dimension and $k_h=|\mathcal{K}_h|$ is the retained score dimension after energy retention, then the query-spectrum-key score computation in head $h$ is reduced from order $O(n^2 d_h)$ to $O(n^2 k_h)$, while the value aggregation and output projection remain unchanged.

\section{Limitations}
\label{sec:limitations}

The present evaluation is deliberately controlled. It measures the consequence of applying SVDA energy retention to the score pathway while preserving the value and output pathways. This design isolates the structural role of the learned spectrum and yields modest but interpretable parameter/MAC reductions. Other intervention targets discussed in the framework, including effective-rank control, robustness-guided regularization, and head-redundancy analysis, are important extensions of the same prescriptive mechanism.

The experiments use compact SVDA-ViT configurations, with Food-101 serving as a larger-token natural-image validation scenario. Larger architectures, pretrained backbones, language tasks, and optimized fused implementations will be important for assessing the practical compression and acceleration potential of the same mechanism. The width-matched standard-attention baseline for CIFAR-10 and CIFAR-100 complements the post-training reduction experiment by checking the relationship between SVDA reduction and directly training a narrower score pathway.

The matched random-mask, matched random reduced-space, largest-$\Sigma$, standard-attention magnitude, and width-matched controls test different aspects of the learned spectral ordering. The repeated random-mask results place the spectral prescription above the sampled random-mask distributions in the present compact models. When random retained coordinates are converted into the same deployable reduced-space operator, the random control becomes stronger and the accuracy gap narrows, but the spectral prescription remains above the random reduced-space mean in all four datasets. The largest-$\Sigma$ control provides evidence that removing high-energy coefficients is more disruptive at the tested budgets. The standard-attention magnitude baseline shows that ordinary projection magnitude is also a meaningful score-dimension pruning signal. Broader matched-budget comparisons with specialized Transformer-compression methods will be useful for mapping where explicit spectral coordinates offer the largest advantage over ordinary weight-based pruning or other compression criteria.

Finally, $\sigma_r^2$ measures the magnitude of a learned coefficient in the score operator. The realized contribution of direction $r$ also depends on the query/key activations and downstream network sensitivity. The largest-$\Sigma$ control and the single-direction ablation study in Appendix~\ref{app:functional_ablation} indicate that coefficient ordering carries functional information, but they do not establish $\sigma_r^2$ as a universally superior functional-importance metric. Broader activation-aware importance measures and gated score-direction baselines would further refine this interpretation.

\section{Conclusion}

This paper examined how the explicit spectrum of SVDA can support a transition from diagnosis to intervention. The empirical study focused on spectral energy retention: retaining a prescribed fraction of learned spectral energy and removing the remaining directions from the attention-score pathway. The broader diagnosis--intervention--verification framework also organizes effective-rank targeting, robustness-oriented regularization, and head-redundancy screening as spectrum-guided intervention modes.

Using a spectral energy-retention prescription with $\rho=0.90$, SVDA removed approximately 24.5--53.7\% of attention-score directions across FashionMNIST, CIFAR-10, CIFAR-100, and Food-101. The resulting dimension-reduced models reduced trainable parameters by approximately 2.6--4.3\% and estimated MACs by approximately 2.8--5.4\%, with paired mean accuracy changes between $-0.03$ and $+0.05$ percentage points over three seeds. Exact norm-preserving coordinate selection matched masking up to floating-point tolerance, while the deployable reduced-space model achieved 97.66--99.81\% prediction agreement with the masked model.

The matched controls clarify the structure of the result. SVDA consistently outperformed largest-$\Sigma$ pruning, showing that high-energy spectral directions are structurally important and should not be removed. Against repeated random-matched masking, the conservative energy-retention rule lay in the upper tail of the sampled random distributions in these compact settings. Against matched random reduced-space controls using the same deployable realization, the gap was smaller but the spectral rule remained above the random reduced-space mean on all four datasets. This makes the learned spectrum a useful operational coordinate system for score-pathway simplification.

The CIFAR-10 sensitivity analysis further showed that the retention parameter $\rho$ controls a meaningful compression-conservation trade-off. Lower $\rho$ values produce stronger pruning and larger parameter/MAC reductions, while higher $\rho$ values produce more conservative interventions and tighter masked-versus-compressed agreement. Thus, $\rho$ is not an arbitrary threshold, but an interpretable relative operating parameter. Its absolute perturbation effect still depends on the scale of the learned spectrum.

Overall, the study shows that the learned SVDA spectrum can define a deterministic and structurally implementable intervention on the attention-score pathway. At the conservative budgets examined here, the resulting models exhibit small paired mean accuracy changes with reported across-seed variability. The evidence supports SVDA as a step from intrinsic attention interpretability toward actionable, spectrum-guided model editing.

\appendix

\section{Experimental Details}
\label{app:experimental_details}

This appendix provides additional implementation details for the evaluation reported in Section~\ref{sec:empirical_validation}. The experiments examine whether the learned SVDA spectrum can define a masked score intervention and a corresponding dimension-reduced score pathway. All reported comparisons are made relative to the original trained SVDA model under the same architecture and training configuration.

\subsection{Datasets and Input Settings}
\label{app:datasets}

Four image-classification datasets were used, including FashionMNIST, CIFAR-10, CIFAR-100, and Food-101. \tablename~\ref{tab:app-datasets} summarizes the corresponding class counts, input resolutions, patch sizes, and image-token counts. The token count excludes any optional classification token and refers only to image patch tokens. FashionMNIST, CIFAR-10, and CIFAR-100 use compact $32\times32$ inputs with patch size 4, producing 64 image tokens. Food-101 is included as a larger natural-image validation scenario with $224\times224$ inputs and patch size 16, producing 196 image tokens. Food-101 is used to test whether the score-structural prescription remains stable in a higher-resolution natural-image setting with a compact SVDA-ViT trained from scratch.

\begin{table}[ht]
\centering
\caption{Datasets and input scenarios.}
\label{tab:app-datasets}
\begin{tabular}{lrrrr}
\toprule
Dataset & Classes & Input size & Patch size & Image tokens \\
\midrule
FashionMNIST & 10  & $32\times32$   & 4  & 64  \\
CIFAR-10     & 10  & $32\times32$   & 4  & 64  \\
CIFAR-100    & 100 & $32\times32$   & 4  & 64  \\
Food-101     & 101 & $224\times224$ & 16 & 196 \\
\bottomrule
\end{tabular}
\end{table}

\subsection{Model Configuration}
\label{app:model_config}

All experiments used compact SVDA-ViT configurations with four Transformer blocks, four attention heads, and embedding dimension 256. \tablename~\ref{tab:app-model-config} summarizes the model and training configuration used for each dataset. The same score-compression logic was used across datasets. The Food-101 setting differs only in input resolution, patch size, batch size, number of classes, and number of training epochs. The Food-101 experiment uses a smaller batch size because of the larger $224\times224$ input resolution. All datasets use the same three random seeds. Reported values are means over these three seeds.

\begin{table}[ht]
\centering
\caption{Model and training configurations used in the experiments.}
\label{tab:app-model-config}
\begin{tabular}{lrrrrrrr}
\toprule
Dataset & Blocks & Heads & Embed dim. & Patch & Batch & Epochs & Seeds \\
\midrule
FashionMNIST & 4 & 4 & 256 & 4  & 256 & 80  & 42, 43, 44 \\
CIFAR-10     & 4 & 4 & 256 & 4  & 256 & 80  & 42, 43, 44 \\
CIFAR-100    & 4 & 4 & 256 & 4  & 256 & 100 & 42, 43, 44 \\
Food-101     & 4 & 4 & 256 & 16 & 64  & 40  & 42, 43, 44 \\
\bottomrule
\end{tabular}
\end{table}

The complete run configuration was recorded with each result file. The query, key, value, output, MLP, classifier, and $\Sigma$ parameters were trained jointly; only the post-training interventions changed the score pathway. The spectral coefficients were initialized to one. Base runs used the Adam optimizer with learning rate $3\times10^{-4}$, no optimizer weight decay, no learning-rate schedule, gradient clipping at 1.0, CUDA automatic mixed precision, deterministic seeding, no dropout, no positional embedding, and the fixed training lengths in \tablename~\ref{tab:app-model-config}. The base energy-retention experiments used $\ell_1$ spectrum penalty $\lambda=0$; Appendix~\ref{app:l1_regularization} reports the separate $\ell_1$ spectrum-regularization check with $\lambda=10^{-4}$. The base training configuration held the orthogonality coefficient fixed at $10^{-3}$ across compared models. Training augmentation used random cropping for FashionMNIST, random cropping plus horizontal flipping for CIFAR-10 and CIFAR-100, and random resized crops plus horizontal flipping for Food-101; evaluation used the corresponding normalized test split and a center crop for Food-101. The saved final checkpoint for each seed was used for the reported post-training masked, norm-preserving, reduced-space, random-mask, random reduced-space, largest-$\Sigma$, and magnitude-pruning evaluations. No post-compression fine-tuning was applied. The main retention value $\rho=0.90$ was fixed before the main comparison; the test split was not used to choose $\rho$, and the additional $\rho$ values were used only for the CIFAR-10 sensitivity study.

\subsection{Energy-Retention Prescription}
\label{app:energy_retention}

For each attention head $h$, the learned SVDA spectrum defines a directional energy
\begin{equation*}
E_{h,r} = \sigma_{h,r}^{2}.
\end{equation*}
For a retention ratio $\rho$, directions are sorted by decreasing energy and the smallest retained set $\mathcal{K}_{h}$ is selected such that
\begin{equation*}
\frac{\sum_{r\in\mathcal{K}_{h}}E_{h,r}}{\sum_{r}E_{h,r}} \geq \rho.
\end{equation*}
All directions outside $\mathcal{K}_{h}$ are removed from the score pathway. The main experiments use $\rho=0.90$. The sensitivity experiment on CIFAR-10 additionally evaluates $\rho\in\{0.85,0.90,0.95\}$. The parameter $\rho$ is a relative within-head energy-retention ratio rather than a raw threshold on $\sigma$. Thus, the same value of $\rho$ can be applied across datasets and heads as a relative prescription even when the absolute scale of the learned spectra differs. It should not be interpreted as fixing the same absolute perturbation across heads or models, because the bound in \eqref{eq:energy_retention_score_bound} also depends on $\|\Sigma\|_F$.

\subsection{Masked and Dimension-Reduced Realizations}
\label{app:masked_compressed}

The empirical protocol distinguishes three models:

\begin{itemize}
\item \textbf{Original SVDA model}: the trained model before intervention.
\item \textbf{Masked SVDA model}: a functional intervention in which low-energy entries of the effective spectrum are set to zero while the original dense tensors remain present.
\item \textbf{Norm-preserving diagnostic}: a coordinate-selection implementation that keeps auxiliary full-dimensional query/key projections only to preserve the original normalization denominators.
\item \textbf{Dimension-reduced SVDA model}: a deployable implementation in which the retained score directions are physically copied into smaller query and key score projections together with the retained entries of $\Sigma$.
\end{itemize}

The dimension-reduced realization modifies only the score-forming pathway, namely $Q$, $K$, and $\Sigma$. The value and output projections are preserved. Exact equivalence to masking additionally requires normalization in the original space, coordinate selection without reduced-space renormalization, and retention of the original $1/\sqrt{d_h}$ scale. Reduced-space renormalization or $1/\sqrt{k_h}$ scaling defines a different, dimension-reduced operator.

Let $N_D(u)=u/(\|u\|_2+\epsilon)$ denote row-wise normalization over a $D$-dimensional score coordinate vector. For head $h$, write $\widehat{Q}_h=N_{d_h}(Q_h)$ and $\widehat{K}_h=N_{d_h}(K_h)$. The masked score operator is
\begin{equation*}
S_h^{\mathrm{mask}}
=\frac{\widehat{Q}_h\left(M_h\Sigma_h\right)\widehat{K}_h^{\top}}{\sqrt{d_h}},
\end{equation*}
where $M_h$ is the binary retained-coordinate mask. The norm-preserving coordinate-selection diagnostic uses
\begin{equation*}
S_h^{\mathrm{exact}}
=\frac{(\widehat{Q}_hP_{\mathcal{K}_h}^{\top})
\left(P_{\mathcal{K}_h}\Sigma_hP_{\mathcal{K}_h}^{\top}\right)
(\widehat{K}_hP_{\mathcal{K}_h}^{\top})^{\top}}{\sqrt{d_h}},
\end{equation*}
with the norms computed before coordinate selection. This is the diagnostic used to verify exact masked-versus-coordinate-selection agreement. The deployable reduced-space implementation instead computes the query/key normalization inside the compressed projection tensor. Because the retained width can differ by layer and head, the implementation uses a shared active width $k_{\star}$ equal to the largest retained width required by any head in that compressed model and masks inactive padded coordinates. Its score computation is therefore
\begin{equation*}
S_h^{\mathrm{red}}
=\frac{N_{k_{\star}}\!\left(Q_hP_{\mathcal{K}_h}^{\top}\right)
\left(P_{\mathcal{K}_h}\Sigma_hP_{\mathcal{K}_h}^{\top}\right)
N_{k_{\star}}\!\left(K_hP_{\mathcal{K}_h}^{\top}\right)^{\top}}{\sqrt{d_h}},
\end{equation*}
up to the zero padding used for dense batched tensors. This equation defines the deployable reduced-space operator evaluated empirically in the experiments.

\subsection{Masked--Compressed Agreement Diagnostics}
\label{app:equivalence}

To assess whether the dimension-reduced model remains close to the masked prescription, the logits of the masked model and the dimension-reduced model are compared on the same test batches. The relative logit discrepancy is
\begin{equation*}
\Delta_{\mathrm{rel}} = \frac{\left\lVert f_{\mathrm{masked}}(x)-f_{\mathrm{compressed}}(x)\right\rVert_2} {\left\lVert f_{\mathrm{masked}}(x)\right\rVert_2+\epsilon}
\end{equation*}
where $\epsilon$ is a small numerical stabilizer. Prediction agreement is
\begin{equation*}
A_{\mathrm{pred}} = \frac{1}{N} \sum_{i=1}^{N} \mathbb{I} \left[ \arg\max f_{\mathrm{masked}}(x_i) = \arg\max f_{\mathrm{compressed}}(x_i) \right]
\end{equation*}

Lower relative discrepancy and higher prediction agreement indicate empirical closeness. Exact equivalence corresponds to score/logit differences at floating-point tolerance; the reported nonzero discrepancies quantify the practical effect of reduced-space normalization.

\subsection{MAC and Parameter Accounting}
\label{app:mac_parameter_accounting}

Parameter reductions are computed from trainable parameter counts before and after score-structural compression. MAC reductions are estimated from the projection and attention operations implied by the implemented Transformer blocks. The MAC estimate is used as an architecture-level arithmetic proxy rather than a hardware latency proxy. For a dense SVDA attention head, the score-forming projections and attention-score computation include contributions from the query projection, the key projection, the diagonal spectrum, and the query--spectrum--key matrix product. In the dimension-reduced realization, each head $h$ retains only $k_h=|\mathcal{K}_h|$ score directions. Thus, the score-forming part of the computation is reduced from dimension $d_h$ to $k_h$ for the $Q\Sigma K^{\top}$ pathway. The value and output pathways remain full by design. The total reported MAC reduction therefore reflects the arithmetic savings induced by the smaller score pathway. Fused kernels, sparse-kernel acceleration, and deployment-level optimization are implementation-dependent concerns and are treated separately from the architecture-level accounting.

Let $H$ denote the number of attention heads and $d_h$ the dimension of each head, so that $d_{\mathrm{model}}=Hd_h$. Relative to plain attention with the same query, key, value, and output projections, dense SVDA adds $Hd_h=d_{\mathrm{model}}$ learned spectral coefficients per layer. Applying the diagonal spectrum requires $nd_{\mathrm{model}}$ elementwise multiplications per layer. Row-normalizing both the query and key projections additionally requires $O(nd_{\mathrm{model}})$ arithmetic for squared norms, reductions, reciprocal square roots, and rescaling. These costs are lower order in the token count than the $O(n^2d_{\mathrm{model}})$ query-key score computation, but they are not zero. The reductions reported for the dimension-reduced SVDA model are measured relative to dense SVDA; a comparison with plain attention must subtract this additional spectral and normalization overhead when reporting the net parameter and arithmetic effect.

\tablename~\ref{tab:app-svda-overhead} reports the measured dense SVDA overhead relative to a plain-attention model with the same projections, MLPs, classifier head, and input resolution. The parameter overhead is 1,024 parameters in all configurations, corresponding to four layers with $d_{\mathrm{model}}=256$ learned spectral coefficients per layer. The MAC overhead includes diagonal modulation and the row-normalization arithmetic used by the implementation.

\begin{table}[ht]
\centering
\caption{Dense SVDA overhead relative to plain attention. MACs are per image under the accounting convention used for the reported reductions.}
\label{tab:app-svda-overhead}
\small
\begin{tabular}{lrrrrrr}
\toprule
Dataset & Std. params & SVDA params & $\Delta$ params & Std. MACs & SVDA MACs & $\Delta$ MACs \\
\midrule
FashionMNIST & 3,166,730 & 3,167,754 & 1,024 & 213,389,824 & 213,724,704 & 334,880 \\
CIFAR-10     & 3,174,922 & 3,175,946 & 1,024 & 213,914,112 & 214,248,992 & 334,880 \\
CIFAR-100    & 3,198,052 & 3,199,076 & 1,024 & 213,937,152 & 214,272,032 & 334,880 \\
Food-101     & 3,382,629 & 3,383,653 & 1,024 & 737,750,272 & 738,765,216 & 1,014,944 \\
\bottomrule
\end{tabular}
\end{table}

\subsection{Aggregate Empirical Results}
\label{app:aggregate_results}

\tablename~\ref{tab:app-main-results} reports the paired accuracy differences for the main $\rho=0.90$ SVDA experiments. Values are means $\pm$ sample standard deviations over the three seeds. Positive values indicate higher accuracy than the original trained SVDA checkpoint for the same seed.

\begin{table}[ht]
\centering
\caption{Paired accuracy differences for the $\rho=0.90$ SVDA intervention. Values are percentage points.}
\label{tab:app-main-results}
\small
\begin{tabular}{lrrrr}
\toprule
Dataset & Masked & Reduced & Random & Largest-$\Sigma$ \\
\midrule
FashionMNIST & $+0.013\pm0.015$ & $+0.003\pm0.040$ & $-0.410\pm0.042$ & $-3.987\pm0.813$ \\
CIFAR-10     & $+0.003\pm0.015$ & $+0.050\pm0.010$ & $-0.391\pm0.094$ & $-2.333\pm0.508$ \\
CIFAR-100    & $+0.013\pm0.015$ & $-0.003\pm0.042$ & $-0.920\pm0.084$ & $-4.890\pm0.419$ \\
Food-101     & $-0.004\pm0.024$ & $-0.025\pm0.034$ & $-1.429\pm0.174$ & $-3.558\pm0.632$ \\
\bottomrule
\end{tabular}
\end{table}

\tablename~\ref{tab:app-seed-level} gives the seed-level accuracies underlying the main aggregate results. The random column is the within-seed mean over the 50 matched random masks.

\begin{table}[!tb]
\centering
\caption{Seed-level accuracies for the main $\rho=0.90$ SVDA experiments.}
\label{tab:app-seed-level}
\small
\begin{tabular}{llrrrrr}
\toprule
Dataset & Seed & Original & Masked & Reduced & Random & Largest-$\Sigma$ \\
\midrule
FashionMNIST & 42 & 87.08 & 87.08 & 87.12 & 86.72 & 83.29 \\
FashionMNIST & 43 & 86.71 & 86.74 & 86.72 & 86.27 & 83.42 \\
FashionMNIST & 44 & 87.40 & 87.41 & 87.36 & 86.98 & 82.52 \\
CIFAR-10 & 42 & 72.13 & 72.12 & 72.17 & 71.76 & 70.18 \\
CIFAR-10 & 43 & 71.37 & 71.37 & 71.42 & 70.88 & 68.46 \\
CIFAR-10 & 44 & 72.01 & 72.03 & 72.07 & 71.70 & 69.87 \\
CIFAR-100 & 42 & 40.08 & 40.08 & 40.03 & 39.12 & 35.64 \\
CIFAR-100 & 43 & 41.20 & 41.21 & 41.23 & 40.22 & 35.93 \\
CIFAR-100 & 44 & 41.08 & 41.11 & 41.09 & 40.26 & 36.12 \\
Food-101 & 42 & 38.58 & 38.59 & 38.52 & 37.24 & 35.69 \\
Food-101 & 43 & 37.61 & 37.62 & 37.62 & 36.29 & 33.96 \\
Food-101 & 44 & 37.89 & 37.86 & 37.87 & 36.26 & 33.75 \\
\bottomrule
\end{tabular}
\end{table}

\tablename~\ref{tab:app-main-structure} repeats the structural reductions and masked--reduced diagnostics for completeness.

\begin{table}[!tb]
\centering
\caption{Structural reduction and masked--reduced diagnostics for the $\rho=0.90$ SVDA intervention.}
\label{tab:app-main-structure}
\small
\begin{tabular}{lccccc}
\toprule
Dataset & Dir. pruned & Params $\downarrow$ & MACs $\downarrow$ & Rel. L2 & Pred. agree \\
\midrule
FashionMNIST & $24.74\pm0.41$\% & $2.69\pm0.30$\% & $2.93\pm0.33$\% & 0.0096 & 99.72\% \\
CIFAR-10     & $24.51\pm0.93$\% & $2.59\pm0.26$\% & $2.83\pm0.28$\% & 0.0030 & 99.81\% \\
CIFAR-100    & $29.36\pm0.78$\% & $3.78\pm0.15$\% & $4.14\pm0.16$\% & 0.0071 & 99.33\% \\
Food-101     & $53.68\pm3.71$\% & $4.30\pm0.56$\% & $5.38\pm0.70$\% & 0.0223 & 97.66\% \\
\bottomrule
\end{tabular}
\end{table}

\subsection{Control Experiments}
\label{app:controls}

Two matched masked controls were used at the same pruning ratio as the SVDA energy-retention prescription. The random-matched control removes the same number of score directions as the SVDA prescription, but selects directions randomly within each head. The largest-$\Sigma$ control removes the highest-energy score directions instead of the lowest-energy directions. \tablename~\ref{tab:app-controls} reports the matched-control results used in the main-text discussion for the $\rho=0.90$ experiments. Values denote mean accuracy differences in percentage points. Positive values indicate that the reduced-space SVDA energy-retention model gives higher accuracy than the corresponding masked control.

\begin{table}[!tb]
\centering
\caption{Matched masked-control results for $\rho=0.90$.}
\label{tab:app-controls}
\begin{tabular}{lrr}
\toprule
Dataset & SVDA $-$ Random & SVDA $-$ Largest-$\Sigma$ \\
\midrule
FashionMNIST & +0.413 pp & +3.990 pp \\
CIFAR-10     & +0.441 pp & +2.383 pp \\
CIFAR-100    & +0.917 pp & +4.887 pp \\
Food-101     & +1.404 pp & +3.533 pp \\
\bottomrule
\end{tabular}
\end{table}

\tablename~\ref{tab:app-random-distribution} reports the repeated random-mask distribution. The spectral column uses the masked spectral prescription, because the random masks are functional masked interventions matched to the retained counts.

\begin{table}[!tb]
\centering
\caption{Repeated matched random-mask distribution for $\rho=0.90$. Values are accuracies; $q_{05}$--$q_{95}$ denotes the 5th--95th percentile range of the 50-mask distribution, averaged over seeds.}
\label{tab:app-random-distribution}
\small
\begin{tabular}{lrrrr}
\toprule
Dataset & Spectral masked & Random mean & Random $q_{05}$--$q_{95}$ & $p(\mathrm{rand}\geq\mathrm{spec})$ \\
\midrule
FashionMNIST & 87.08 & 86.65 & 86.42--86.86 & 0.0196 \\
CIFAR-10     & 71.84 & 71.45 & 71.23--71.64 & 0.0196 \\
CIFAR-100    & 40.80 & 39.87 & 39.46--40.26 & 0.0196 \\
Food-101     & 38.02 & 36.60 & 36.07--37.06 & 0.0196 \\
\bottomrule
\end{tabular}
\end{table}

\tablename~\ref{tab:app-random-reduced-distribution} reports the matched random reduced-space control. For each trained seed, 50 random retained-coordinate sets were sampled with the same retained count as the SVDA prescription in each head. Each random set was then instantiated as the same deployable reduced-space operator used for the SVDA reduced model, with the same padding and normalization implementation and without post-compression fine-tuning. This separates functional masking from reduced-space realization.

\begin{table}[!tb]
\centering
\caption{Repeated matched random reduced-space distribution for $\rho=0.90$. Values are accuracies; $q_{05}$--$q_{95}$ denotes the 5th--95th percentile range of the 50 reduced-space random realizations, averaged over seeds.}
\label{tab:app-random-reduced-distribution}
\small
\begin{tabular}{lrrrr}
\toprule
Dataset & SVDA reduced & Random mean & Random $q_{05}$--$q_{95}$ & $p(\mathrm{rand}\geq\mathrm{SVDA})$ \\
\midrule
FashionMNIST & 87.07 & 87.03 & 86.96--87.10 & 0.222 \\
CIFAR-10     & 71.89 & 71.83 & 71.70--71.95 & 0.261 \\
CIFAR-100    & 40.78 & 40.67 & 40.48--40.85 & 0.203 \\
Food-101     & 38.00 & 37.56 & 37.14--37.90 & 0.046 \\
\bottomrule
\end{tabular}
\end{table}

\subsection{Energy-Retention Sensitivity}
\label{app:rho_sensitivity}

The CIFAR-10 sensitivity experiment evaluates the retention parameter $\rho$ at $\rho\in\{0.85,0.90,0.95\}$. \tablename~\ref{tab:app-rho-sensitivity} reports the corresponding score-structural results.

\begin{table}[!tb]
\centering
\caption{CIFAR-10 sensitivity to $\rho$.}
\label{tab:app-rho-sensitivity}
\small
\begin{tabular}{lcccc}
\toprule
$\rho$ & Dir. pruned & Params $\downarrow$ & MACs $\downarrow$ & Reduced $\Delta$ \\
\midrule
0.85 & $32.36\pm1.19$\% & $3.63\pm0.26$\% & $3.96\pm0.28$\% & $+0.070\pm0.017$ \\
0.90 & $24.51\pm0.93$\% & $2.59\pm0.26$\% & $2.83\pm0.28$\% & $+0.050\pm0.010$ \\
0.95 & $14.49\pm0.56$\% & $1.38\pm0.30$\% & $1.51\pm0.33$\% & $+0.030\pm0.030$ \\
\bottomrule
\end{tabular}
\end{table}

\tablename~\ref{tab:app-rho-controls} reports the matched-control results for the same CIFAR-10 sensitivity experiment. Values denote mean accuracy differences in percentage points.

\begin{table}[!tb]
\centering
\caption{CIFAR-10 matched-control results.}
\label{tab:app-rho-controls}
\begin{tabular}{lrr}
\toprule
$\rho$ & SVDA $-$ Random & SVDA $-$ Largest-$\Sigma$ \\
\midrule
0.85 & +0.661 pp & +2.583 pp \\
0.90 & +0.441 pp & +2.383 pp \\
0.95 & +0.231 pp & +1.990 pp \\
\bottomrule
\end{tabular}
\end{table}

\subsection{Standard-Attention and Width-Matched Baselines}
\label{app:standard_baselines}

\tablename~\ref{tab:app-standard-magnitude} repeats the standard-attention magnitude-pruning baseline with paired differences for the datasets used in this control.

\begin{table}[ht]
\centering
\caption{Standard-attention magnitude-pruning baseline at matched $\rho=0.90$ retained counts.}
\label{tab:app-standard-magnitude}
\small
\begin{tabular}{lrrrrr}
\toprule
Dataset & Original & Magnitude & Mag. $\Delta$ & Random & Rand. $\Delta$ \\
\midrule
FashionMNIST & $87.35\pm0.14$ & $87.28\pm0.37$ & $-0.073\pm0.275$ & $86.55\pm0.64$ & $-0.803\pm0.556$ \\
CIFAR-10     & $73.04\pm0.14$ & $70.80\pm2.17$ & $-2.237\pm2.068$ & $69.92\pm0.57$ & $-3.120\pm0.482$ \\
CIFAR-100    & $44.71\pm0.48$ & $42.51\pm1.42$ & $-2.197\pm1.135$ & $40.21\pm0.53$ & $-4.493\pm0.102$ \\
\bottomrule
\end{tabular}
\end{table}

\tablename~\ref{tab:app-width-matched} reports a from-scratch width-matched standard-attention baseline. The narrow model uses $d_k=48$, which is close to the retained score width induced by the $\rho=0.90$ SVDA prescription on CIFAR-10 and CIFAR-100. This comparison checks whether post-training score-pathway reduction should be understood alongside a model trained with a narrower score pathway from the outset.

\begin{table}[ht]
\centering
\caption{From-scratch width-matched standard-attention baseline. Values are accuracies.}
\label{tab:app-width-matched}
\small
\begin{tabular}{lrrrr}
\toprule
Dataset & Standard $d_k=64$ & Standard $d_k=48$ & Dense SVDA & Reduced SVDA \\
\midrule
CIFAR-10  & $73.04\pm0.14$ & $72.34\pm0.29$ & $71.84\pm0.41$ & $71.89\pm0.41$ \\
CIFAR-100 & $44.71\pm0.48$ & $44.20\pm0.36$ & $40.79\pm0.61$ & $40.78\pm0.66$ \\
\bottomrule
\end{tabular}
\end{table}

\subsection{Spectrum-Regularization Check}
\label{app:l1_regularization}

As a second intervention check, CIFAR-10 SVDA models were trained with an $\ell_1$ penalty on the learned spectra, $\lambda\sum_{\ell,h}\|\Sigma^{(\ell,h)}\|_1$, using $\lambda=10^{-4}$. \tablename~\ref{tab:app-l1} shows that this regularizer increases the fraction of removable score directions at $\rho=0.90$, with the corresponding accuracy behavior reported in the final column.

\begin{table}[ht]
\centering
\caption{CIFAR-10 $\ell_1$ spectrum-regularization check at $\rho=0.90$.}
\label{tab:app-l1}
\small
\begin{tabular}{lrrrrr}
\toprule
$\lambda$ & Original acc. & Dir. pruned & Params $\downarrow$ & MACs $\downarrow$ & Reduced $\Delta$ \\
\midrule
0 & $71.84\pm0.41$ & $24.51\pm0.93$\% & $2.59\pm0.26$\% & $2.83\pm0.28$\% & $+0.050\pm0.010$ \\
$10^{-4}$ & $71.99\pm0.97$ & $47.10\pm1.77$\% & $4.15\pm0.52$\% & $4.52\pm0.57$\% & $+0.027\pm0.071$ \\
\bottomrule
\end{tabular}
\end{table}

\subsection{Single-Direction Functional Ablation}
\label{app:functional_ablation}

To test how $\sigma_r^2$ relates to functional perturbation, a single-coordinate ablation study was run on CIFAR-10 for seed 42. Each score direction was ablated individually and compared with validation-loss change, accuracy change, and relative logit change. \tablename~\ref{tab:app-ablation-correlations} reports Pearson and Spearman correlations for both $\sigma_r^2$ and a Q/K projection-magnitude score. The correlations show that $\sigma_r^2$ carries functional information, especially in rank correlation with logit perturbation, but they do not identify it as a universally superior importance criterion: Q/K magnitude is comparable and stronger for some metrics. The role of $\sigma_r^2$ in this work is therefore as an intrinsic coordinate-energy signal within the SVDA score operator, while functional importance can also depend on activations and downstream sensitivity.

\begin{table}[ht]
\centering
\caption{CIFAR-10 seed-42 single-direction ablation correlations.}
\label{tab:app-ablation-correlations}
\small
\begin{tabular}{lrr}
\toprule
Comparison & Pearson $r$ & Spearman $\rho$ \\
\midrule
$\sigma^2$ vs. $\Delta$ loss & 0.452 & 0.548 \\
$\sigma^2$ vs. relative logit L2 & 0.616 & 0.905 \\
$\sigma^2$ vs. $\Delta$ accuracy & -0.007 & -0.128 \\
Q/K magnitude vs. $\Delta$ loss & 0.619 & 0.526 \\
Q/K magnitude vs. relative logit L2 & 0.848 & 0.939 \\
Q/K magnitude vs. $\Delta$ accuracy & 0.093 & -0.101 \\
\bottomrule
\end{tabular}
\end{table}

\bibliographystyle{unsrt}  
\bibliography{references}

\end{document}